\newtheorem{theorem}{Theorem}
\title{Rethinking Negative Sampling for Handling Missing Entity Annotations}
\author{Yangming Li, Lemao Liu, Shuming Shi \\
	Tencent AI Lab \\
	Shenzhen, China \\
  \texttt{\{newmanli,redmondliu,shumingshi\}@tencent.com} \\}
\begin{document}
\maketitle

\begin{abstract}
	
	Negative sampling is highly effective in handling missing annotations for named entity recognition (NER). One of our contributions is an analysis on how it makes sense through introducing two insightful concepts: missampling and uncertainty. Empirical studies show low missampling rate and high uncertainty are both essential for achieving promising performances with negative sampling. Based on the sparsity of named entities, we also theoretically derive a lower bound for the probability of zero missampling rate, which is only relevant to sentence length. The other contribution is an adaptive and weighted sampling distribution that further improves negative sampling via our former analysis. Experiments on synthetic datasets and well-annotated datasets (e.g., CoNLL-2003) show that our proposed approach benefits negative sampling in terms of F1 score and loss convergence. Besides, models with improved negative sampling have achieved new state-of-the-art results on real-world datasets (e.g., EC).
	
\end{abstract}

\section{Introduction}

	With powerful neural networks and abundant well-labeled corpora, named entity recognition (NER) models have achieved promising performances~\citep{huang2015bidirectional,ma-hovy-2016-end,akbik-etal-2018-contextual,li-etal-2020-unified}. However, in many scenarios, available training data is low-quality, which means a portion of named entities are absent in annotations. Fig.~\ref{fig:Example Demo} depicts a sentence and its incomplete annotations. Fine-grained NER~\citep{ling2012fine} is a typical case. Its training data is mainly obtained through applying weak supervision to unlabeled corpora. Past works~\citep{shang-etal-2018-learning,li2021empirical} find missing annotations impact NER models and refer this to \textit{unlabeled entity problem}.
	
	Recently, \citet{li2021empirical} find it's the misguidance of unlabeled entities to NER models in training that causes their poor performances. To eliminate this adverse impact, they propose a simple yet effective approach based on negative sampling. Compared with its counterparts~\citep{li2005learning,tsuboi-etal-2008-training,shang-etal-2018-learning,peng-etal-2019-distantly}, this method is of high flexibility, without relying on external resources, heuristics, etc.
	
	While negative sampling has handled missing annotations well, there is no systematic study on how it works, especially what potential factors are involved. From a number of experiments, we find missampling and uncertainty both worth receiving attention. Missampling means that some unlabeled entities are mistakenly drawn into the set of training negatives by negative sampling. To quantitively describe this, we define missampling rate, the proportion of unlabeled entities in sampled negatives, for a sentence. Uncertainty indicates how hard a sampled negative is for NER models to recognize, and we use entropy to estimate it. Empirical studies show low missampling rate and high uncertainty are both indispensable for effectively applying negative sampling. Besides, based on the observation that entities are commonly sparse, we provide a lower bound for the probability of zero missampling rate with theoretical proof, which is only related to sentence length.

	\begin{figure}
		\centering
		\includegraphics[width=0.48\textwidth]{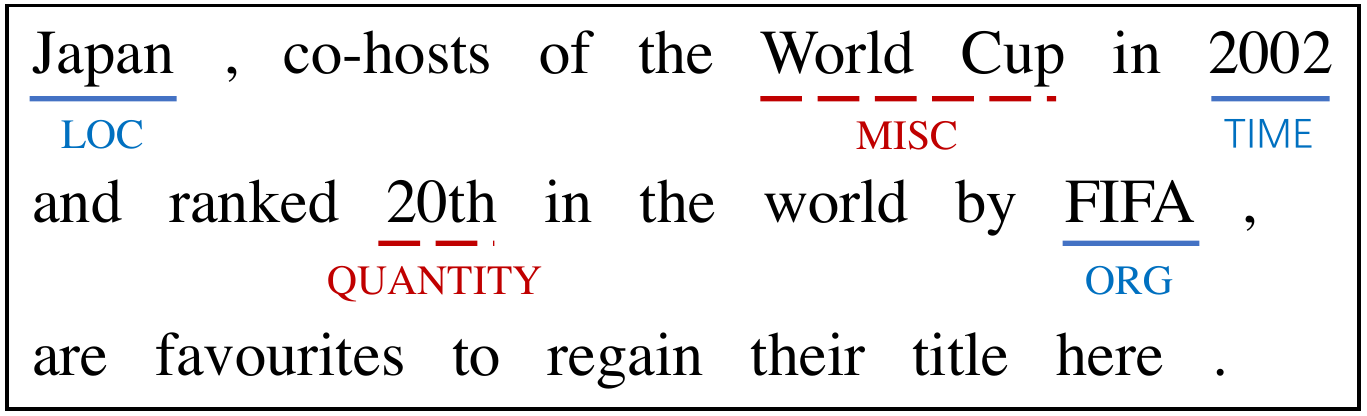}
		
		\caption{A toy example to show \textit{unlabeled entity problem}. The phrases underlined with dashed lines are the named entities neglected by annotators.}
		\label{fig:Example Demo}
	\end{figure}

	\begin{figure*}	
		\centering
		\includegraphics[width=0.9\textwidth]{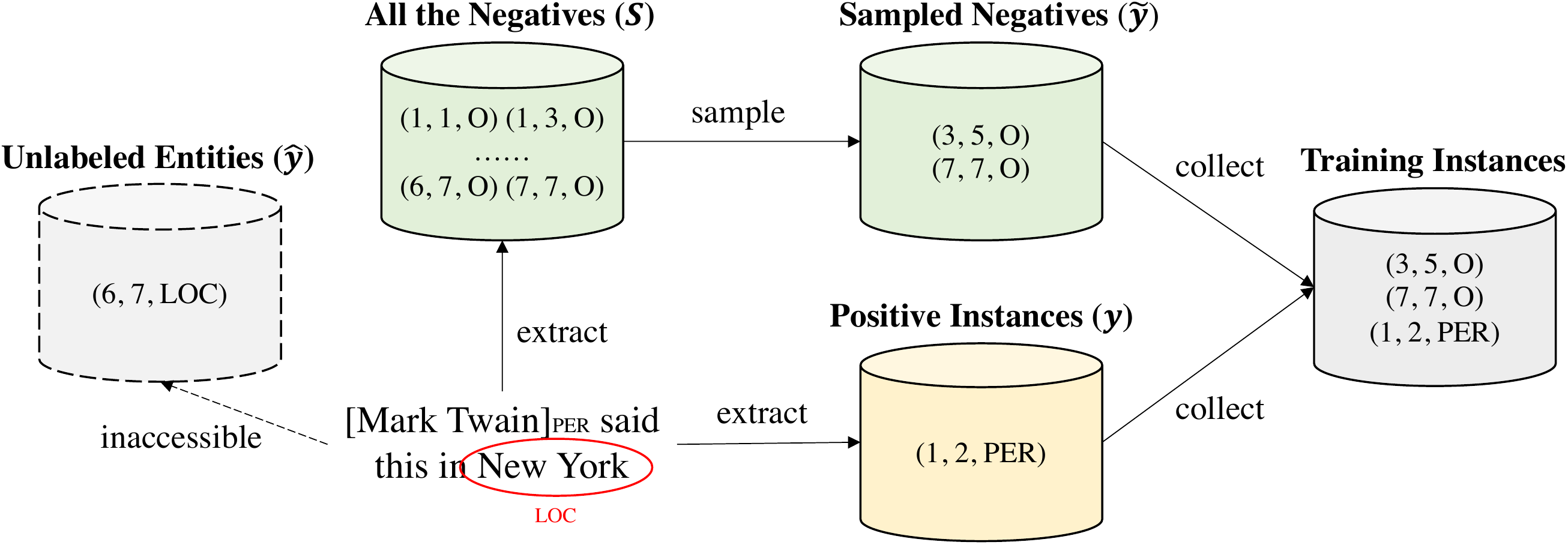}
		
		\caption{An example to depict how negative sampling collects training negatives given an annotated sentence. The phrase marked by a red circle is an unlabeled entity.}
		\label{fig:Negative Sampling}
	\end{figure*}
	
	Originally, \citet{li2021empirical} adopt uniform sampling distribution for negative sampling. Inspired by former findings, we introduce a weighted sampling distribution to displace the uniform one, which takes missampling and uncertainty into account. Our distribution is purely computed from the predictions of an NER model. This means it coevolves with the model throughout the training process. The adaptive property of our method is appealing since it doesn't rely on manual annotations or additional models to indicate valuable negatives.
	
	We have conducted extensive experiments to verify the effectiveness of our weighed sampling distribution. Results on synthetic datasets and well-annotated datasets (e.g., OntoNotes 5.0) show that weighted sampling distribution improves negative sampling in performances and loss convergence. Notably, with improved negative sampling, our NER models have established new state-of-the-art performances on real-world datasets, like EC~\citep{yang-etal-2018-distantly}.
	
\section{Preliminaries}
\label{sec:Preliminaries}

\subsection{Unlabeled Entity Problem}

	Given an $n$-length sentence, $\mathbf{x} = [x_1, x_2, \cdots, x_n]$, an annotator (e.g., human) will mark a set of named entities from it as $\mathbf{y} = \{y_1, y_2, \cdots, y_m\}$. $n$ is sequence length and $m$ is set size. Every entity, $y_k$, of the set, $\mathbf{y}$, is denoted as a tuple, $(i_k, j_k, l_k)$. $(i_k, j_k)$ is the span of the entity that corresponds to the phrase, $\mathbf{x}_{i_k,j_k} = [x_{i_k}, x_{i_k + 1}, \cdots, x_{j_k}]$, and $l_k$ is its label. \textit{Unlabeled entity problem} occurs when some ground truth named entities, $\widehat{\mathbf{y}}$, are missed by annotators, which means they are not contained in the labeled entity collection, $\mathbf{y}$. In distantly supervised NER~\citep{mintz-etal-2009-distant,10.1145/2783258.2783362,fries2017swellshark}, this is resulted from the limited coverage of external resources, such as predefined ontology. In other situations (e.g., fine-grained NER where manual annotation is extremely hard), the cause may be the negligence of human annotators.
	
	Take Fig.~\ref{fig:Negative Sampling} as an example. The set of labeled entities is $\mathbf{y} = [(1, 2, \mathrm{PER})]$, that of unlabeled entities is $\widehat{\mathbf{y}} = \{(6, 7, \mathrm{LOC})\}$, and that of ground-truth entities is $\mathbf{y} \cup \widehat{\mathbf{y}}$.
	
	\begin{figure*}[t]
		\centering
		\includegraphics[width=0.9\textwidth]{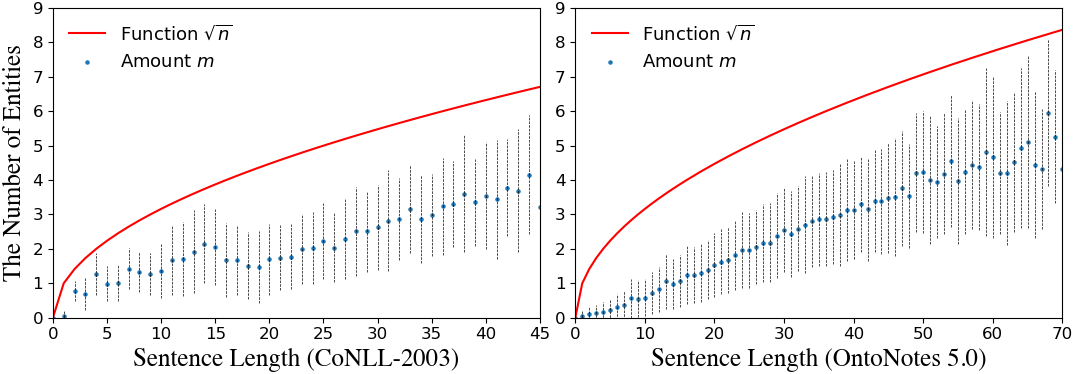}
		
		\caption{The comparisons between changes of entity number and square root curve.}
		\label{fig:Entiy Sparsity}
	\end{figure*}
	
	Let $\mathcal{S}$ denote the set that includes all spans of a sentence, $\mathbf{x}$, except the ones of annotated named entities, $ \mathbf{y}$. Every span in this set is labeled with ``O", indicating that it's a possible negative. A standard training strategy for NER models is to minimize the loss on annotated positives, $\mathbf{y}$, and all negative candidates, $\mathcal{S}$. Unfortunately, since $\mathcal{S}$ might contain unlabeled entities in $\widehat{\mathbf{y}}$, NER models are seriously misguided in training. To address this problem, \cite{li2021empirical} propose to circumvent unlabeled entities with negative sampling.

\subsection{Training with Negative Sampling}
\label{subsec:Negative Sampling}

	The core idea is to uniformly sample a few negative candidates, $\widetilde{\mathbf{y}}$, from $\mathcal{S}$ for reliably training NER models. Under this scheme, the training instances contain sampled negatives, $\widetilde{\mathbf{y}}$, and positives from annotated entities, $\mathbf{y}$. With them, $\mathbf{y}\cup \widetilde{\mathbf{y}}$, a cross-entropy loss is incurred as
	\begin{equation}
	\mathcal{J} = \sum_{(i,j,l) \in \mathbf{y} \cup \widetilde{\mathbf{y}}} -\log P(l\mid \mathbf{x}_{i,j}; \theta).
	\label{eq:loss}
	\end{equation}
	$P(l\mid \mathbf{x}_{i,j}; \theta)$ is the probability that the ground truth label of the span, $(i,j)$, is $l$ and $\theta$ represents the parameters of a model. Following \citet{li2021empirical}, our NER models are all span-based, which treat a span, instead of a single token, as the basic unit for labeling.
	
	Negative sampling is probable to avoid models being exposed to unlabeled entities. As Fig.~\ref{fig:Negative Sampling} shows, the false negative,  $(6, 7, \mathrm{O})$, is not involved in training. \citet{li2021empirical} have empirically confirmed the effectiveness of negative sampling in handling unlabeled entities. However, there is no systematic study to explain how it works, and what factors are relevant.
	
\section{Analyzing Negative Sampling}
\label{sec:two-perspectives}

	We analyze how negative sampling leads NER models that suffer from missing entity annotations to promising results from two angles: missampling and uncertainty.
	
\subsection{Missampling Rate}

\subsubsection{Definition} 
	
	Missampling rate, $\gamma$, is defined as, for a sentence, the proportion of unlabeled entities contained in sampled negatives, $\widetilde{\mathbf{y}}$. Formally, it's computed as
	\begin{equation}\nonumber
	\gamma = 1 - \frac{\#\{ (i,j,l) \mid (i,j,l) \in \widehat{\mathbf{y}}; (i, j, \mathrm{O}) \notin \widetilde{\mathbf{y}}  \}}{\#\widetilde{\mathbf{y}}},
	\end{equation}
	where $\#$ is an operation that measures the size of an unordered set.
	
	The missampling rate $\gamma$ reflects the quality of training instances, $\mathbf{y} \cup  \widetilde{\mathbf{y}}$. A lower averaged rate over the whole dataset means that the NER model meets fewer unlabeled entities in training. Intuitively, this leads to higher F1 scores since there is less misguidance from missing annotations to the model. Hence, missampling is an essential factor for analysis.
	
\subsubsection{Missampling Affects Performance}
	
	\begin{table}[t]
		\centering
		\begin{tabular}{c|ccc}
			\hline
			Average $\gamma$ &0.76\% & 1.52\% & 4.11\% \\ 
			
			\hline
			F1 Score &89.86  &87.35 & 83.11 \\ 
			\hline
		\end{tabular}
	
		\caption{The effects of $\gamma$ on F1.}
		\label{tab:gamma}
	\end{table}

	We design a simulation experiment to empirically verify the above intuition. Like \citet{li2021empirical}, we build synthetic datasets as follows. We start from a well-labeled dataset, i.e., CoNLL-2003~\citep{sang2003introduction}, and then mimic \textit{unlabeled entity problem} by randomly masking manually annotated entities with a fixed probability $p$ (e.g., $0.7$). In this way, we can obtain unlabeled entities, $\widehat{\mathbf{y}}$, and annotated entities, $\mathbf{y}$, for every sentence, $\mathbf{x}$.
	
	We can obtain different pairs of a missampling rate and an F1 score through running a negative sampling based model on different synthetic datasets. Table~\ref{tab:gamma} demonstrates several cases, and we can see the trend that lower missamping rates lead to better performances. Therefore, we conclude that missampling affects the effectiveness of negative sampling.

\subsubsection{Theoretical Guarantee}

	We also theoretically prove that negative sampling is very robust to unlabeled entities based on a natural property of named entities. 
	
	\paragraph{Entity Sparsity.} Unlike other sequence labeling tasks, such as syntactic chunking~\citep{sang2000introduction} and part-of-speech tagging~\citep{schmid-1994-part}, named entities (i.e., non-``O" segments) are commonly sparse in NER datasets. 
	
	Fig.~\ref{fig:Entiy Sparsity} depicts some statistics of two common NER datasets, CoNLL-2003 and OntoNotes 5.0. The blue points are the averaged number of entities for sentences of fixed lengths. Every point stands on the center of a dashed line, whose length is the 1.6 variance of the entity numbers. The red curves are the square roots of sentence lengths. To avoid being influenced by ``rare events" we erase the points supported by too few cases (i.e., 20).
	
	From the above figure, we can see that the number of ground truth named entities (i.e., unlabeled entities, $\widehat{\mathbf{y}}$, and annotated ones, $\mathbf{y}$) in a sentence is generally smaller than the square root of sentence length, $\sqrt{n}$. Empirically, we have $\#\mathbf{y}+ \#\widehat{\mathbf{y}} \le \sqrt{n}$.
	
	\begin{theorem}
		\label{theorem}
		For a $n$-length sentence $\mathbf{x}$ , assume $\widetilde{\mathbf{y}}$ is the set of sampled negatives with size $\lceil{\lambda n}\rceil (0<\lambda<1)$  via negative sampling. If the premise of entity sparsity holds, then the probability of zero missampling rate, i.e., $\gamma=0$, is bounded.
	\end{theorem}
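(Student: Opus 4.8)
The plan is to turn the event $\{\gamma=0\}$ into a purely combinatorial statement about sampling without replacement and then control it with entity sparsity. First I would count the candidate pool: an $n$-length sentence has exactly $\binom{n+1}{2}=\tfrac{n(n+1)}{2}$ spans $(i,j)$ with $i\le j$, so $\mathcal{S}$, which discards the $\#\mathbf{y}$ annotated spans, has size $|\mathcal{S}|=\tfrac{n(n+1)}{2}-\#\mathbf{y}$. Inside $\mathcal{S}$ the ``bad'' elements are exactly the $\#\widehat{\mathbf{y}}$ spans that correspond to unlabeled entities, and $\gamma=0$ holds precisely when the uniformly drawn subset $\widetilde{\mathbf{y}}$ of size $t:=\lceil\lambda n\rceil$ avoids all of them. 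So $P(\gamma=0)$ equals the hypergeometric quantity $\binom{|\mathcal{S}|-\#\widehat{\mathbf{y}}}{t}\big/\binom{|\mathcal{S}|}{t}$.

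Next I would lower bound this probability. Writing it as the product $\prod_{k=0}^{t-1}\bigl(1-\tfrac{\#\widehat{\mathbf{y}}}{|\mathcal{S}|-k}\bigr)$, one option is to replace every factor by the smallest, giving $\bigl(1-\tfrac{\#\widehat{\mathbf{y}}}{|\mathcal{S}|-t}\bigr)^{t}$; a cleaner route is a union bound over the $\#\widehat{\mathbf{y}}$ bad spans, since each fixed span lands in $\widetilde{\mathbf{y}}$ with probability $t/|\mathcal{S}|$ by symmetry, so $P(\gamma>0)\le \#\widehat{\mathbf{y}}\cdot t/|\mathcal{S}|$ and hence $P(\gamma=0)\ge 1-\#\widehat{\mathbf{y}}\cdot t/|\mathcal{S}|$. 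This is the step where the dependence created by sampling without replacement has to be dealt with, and the union bound on marginals is exactly what sidesteps it (and, incidentally, it also covers the with-replacement variant via Bernoulli's inequality).

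Finally I would invoke entity sparsity, $\#\mathbf{y}+\#\widehat{\mathbf{y}}\le\sqrt{n}$, which yields both $\#\widehat{\mathbf{y}}\le\sqrt{n}$ and $|\mathcal{S}|\ge\tfrac{n(n+1)}{2}-\sqrt{n}$, and substitute these together with $t\le\lambda n+1$. After crude simplification (for instance $\tfrac{n(n+1)}{2}-\sqrt{n}\ge\tfrac{n^{2}}{4}$ once $n\ge 2$) this gives a bound of the form $P(\gamma=0)\ge 1-\tfrac{4(\lambda n+1)}{n^{3/2}}$, which depends only on the sentence length $n$ (with $\lambda$ a fixed constant), increases in $n$, and tends to $1$. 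I expect the only mild obstacle to be bookkeeping rather than anything conceptual: keeping the bound non-vacuous for small $n$ (one can retain the product form $\prod_{k}(1-\#\widehat{\mathbf{y}}/(|\mathcal{S}|-k))$, which automatically lies in $[0,1]$, or simply state the estimate for $n$ beyond a small threshold) and being careful about the direction of each inequality when rounding $\lceil\lambda n\rceil$ up and subtracting $\#\mathbf{y}$ from the span count.
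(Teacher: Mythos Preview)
Your proposal is correct and follows essentially the same route as the paper: write $P(\gamma=0)$ as the hypergeometric product $\prod_{i}\bigl(1-\#\widehat{\mathbf{y}}/(|\mathcal{S}|-i)\bigr)$, then plug in entity sparsity $\#\mathbf{y}+\#\widehat{\mathbf{y}}\le\sqrt{n}$ and $t=\lceil\lambda n\rceil\le\lambda n+1$ to get a lower bound depending only on $n$. The paper linearizes by taking the smallest factor to the $t$-th power and applying Bernoulli's inequality, landing at $q>1-\tfrac{4\lambda\sqrt{n}}{n-1}$; your union-bound shortcut $P(\gamma>0)\le\#\widehat{\mathbf{y}}\cdot t/|\mathcal{S}|$ reaches an equivalent $1-O(n^{-1/2})$ bound a bit more directly, but the argument is otherwise the same.
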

	
	\begin{proof}
	Since $\widetilde{\mathbf{y}}$ is uniformly sampled from $\mathcal{S}$ without replacement, the probability $q$ that $\gamma=0$ for a single sentence $\mathbf{x}$ can be formulated as
	\begin{equation*}  
		q = \prod_{0 \le i < \lceil{\lambda n}\rceil} \Big(1 - \frac{\#\widehat{\mathbf{y}}}{\frac{n(n+1)}{2} - m - i}\Big),
	\end{equation*}
	where $m=\#\mathbf{y}$. The $i$-th product term is the probability that, at the $i$-th sampling turn, the $i$-th sampled candidate doesn't belong to unlabeled entity set, $\widehat{\mathbf{y}}$. 
	
	Then we can derive the following inequalities:
	\begin{equation*}
		\begin{aligned}
		q & \ge \prod_{0 \le i < \lceil{\lambda n}\rceil} \big(1 - \frac{\sqrt{n} - m}{\frac{n(n+1)}{2} - m - i}\big) \\
  & \ge \prod_{0 \le i < \lceil{\lambda n}\rceil} \big(1 - \frac{\sqrt{n} }{\frac{n(n+1)}{2} - i}\big) \\ 
  & > \big(1 - \frac{2\sqrt{n} }{n(n-1) + 2}\big)^{\lceil{\lambda n}\rceil}
		\end{aligned}.
	\end{equation*}
	The first inequality holds because of the assumption; the second one holds because $\frac{\sqrt{n} - m}{\frac{n(n+1)}{2} - m - i}$ is monotonically decreases as $m$ increases, and $m \ge 0$; the last inequality hold since $\frac{\sqrt{n} }{\frac{n(n+1)}{2} - i}$ increases with decreasing $i$, $i < \lceil{\lambda n}\rceil$, and $\lceil{\lambda n}\rceil \le n$.
	
	Because $(1 + a)^b \ge 1 + ba$ for $a \ge -1 \cap b \ge 1$ and $\lceil{\lambda n}\rceil < \lambda n + 1$, we have
	\begin{equation*}
		\begin{aligned}
		q & > \Big( 1 - \frac{2\sqrt{n}}{n(n-1) + 2} \Big)^{\lceil{\lambda n}\rceil} \\
	 & \ge 1 - \frac{2(\lambda n + 1)\sqrt{n}}{n(n-1) + 2} \\ & > 1 - \frac{4\lambda\sqrt{n}}{n-1}
		\end{aligned}.
	\end{equation*} 
	The right-most term monotonically increases with the sentence length $n$, and thus the probability of zero missampling rate for every sentence has a lower bound.
	\end{proof}
	
	This theorem shows that missampling rates for standard negative sampling are controllable, and implies why negative sampling succeeds in handling missing annotations.
		
\subsection{Uncertainty}

\subsubsection{Definition}
\label{sec:uncertainty}

	Assume $P_o(l \mid \mathbf{x}_{i,j})$ is an oracle model that accurately estimates a label distribution over every span $(i,j)$. The uncertainty is defined as the entropy of this distribution:
	\begin{equation*}
	\begin{aligned}
		& H(L\mid \mathbf{X}=\mathbf{x}_{i,j}) = \\ & \sum_{l \in L}-P_o(l \mid \mathbf{x}_{i,j})\log P_o(l \mid \mathbf{x}_{i,j}),
	\end{aligned}
	\end{equation*}
	where $L$ and $\mathbf{X}$ represent the label space and a span, $\mathbf{x}_{i,j}$, respectively.
	
	Note that the oracle model $P_o(l \mid \mathbf{x}_{i,j})$ is generally unreachable. the common practice is to additionally train a model $P(l \mid \mathbf{x}_{i,j}; \theta)$ (see Sec.~\ref{subsec:Negative Sampling}) to approximate it. Besides, the approximate model is learned on held-out training data to avoid over-confident estimation.
	
	Uncertainties essentially measure how difficult a case is for models to make a decision~\citep{jurado2015measuring}. In active learning, uncertainty is used to mine hard unlabeled instances for human annotator~\citep{settles2009active}.  In our scenario, we suspect that the uncertainty of sampled negatives plays an important role in our training with negative sampling.
	
\subsubsection{Uncertainty Affects Performances}
\label{subsec:uncertainty effect}

	\begin{table}[t]
		\centering
		\begin{tabular}{c|ccc}
			\hline
			$H$ & Top-$k$ & Middle-$k$ & Bottom-$k$ \\ 
			\hline
			F1 Score & 88.82  &87.72 & 85.56  \\
			\hline
		\end{tabular}
		
		\caption{The effects of $H$ on F1.}
		\label{tab:h}
	\end{table}

	We design an empirical experiment to verify our hypothesis. Specifically, we first randomly and equally split the entire training data with masked entities into two parts, and the first part is used to train an oracle model $P_o$. For every sentence $\mathbf{x}$ in the second part, we then sample three subsets from $\mathcal{S}$ as training negatives: the first subset denoted by $\widetilde{\mathbf{y}}^t$ corresponding to the top-$k$ uncertainties, and the second denoted by $\widetilde{\mathbf{y}}^m$ corresponding to  middle-$k$ uncertainties, and the third denoted by $\widetilde{\mathbf{y}}^b$ corresponding to the bottom-$k$ uncertainties, with $k=\lceil{\lambda n}\rceil$. Since missampling affects F1 scores as aforementioned, we eliminate the effect on missampling rate by setting $\gamma=0$ when constructing both subsets, i.e., neither subset contains any spans included in $\widehat{\mathbf{y}}$. Finally, we respectively train three models on top of three negative subsets according to Eq.~\ref{eq:loss}, and report their performances on test data in Table~\ref{tab:h}. We can see that the model trained on $\widetilde{\mathbf{y}}^t$ achieves the best performance, which validates our hypothesis.
	
\section{Improving Negative Sampling}
\label{sec:Further Improvement}

	\begin{figure*}
		\centering
		\includegraphics[width=1.0\textwidth]{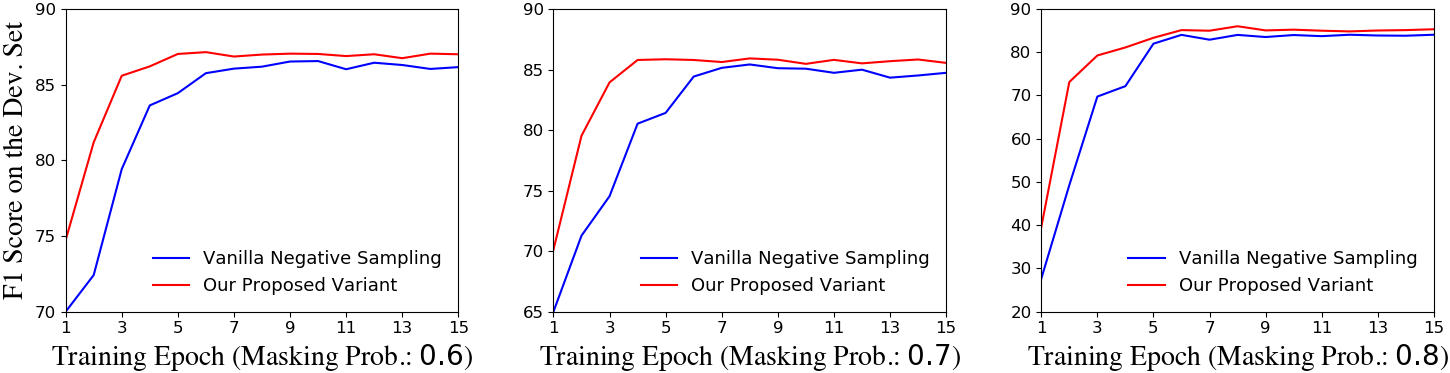}
		
		\caption{The changes of F1 scores with training epochs on some synthetic datasets.}
		\label{fig:Convergence}
	\end{figure*}

	The previous section shows that the effectiveness of negative sampling is dependent on two factors: missampling and uncertainty. As a result, if we had considered both quantities when sampling negatives, we should see larger improvements from final models. In this section, we propose an adaptive and weighted sampling distribution based on these two factors. 
	
	Unfortunately, since missampling rate is defined on top of the unlabeled entities $\widehat{\mathbf{y}}$ which is unknown in practice, it is not straightforward to apply missampling for improving negative sampling.  Therefore, we assume that an oracle model, $z_{i,j,l}=P_o(l\mid \mathbf{x}_{i,j})$, exists, which is likely to predict the ground-truth label for every span $\mathbf{x}_{i,j}$. Then we define a score $v_{i,j}$ as the difference between the score $z_{i,j,\mathrm{O}}$ and the maximum label score on the span $(i,j)$:
	\begin{equation}
	v_{i,j} =  z_{i,j,\mathrm{O}} - \max_{l \in \mathcal{L}} z_{i,j,l}.
	\end{equation}
	Intuitively, if $v_{i,j}$ is high, then $z_{i,j,\mathrm{O}}$ is high and $\max_{l \in \mathcal{L}} z_{i,j,l}$ is low. In other words, $\mathbf{x}_{i,j}$ is likely to be with ``O" label and thus the missampling rate should be small. Hence sampling such a span as a negative won't hurt NER models.  Note that $\max_{l\in\mathcal{L}} z_{i,j,l}$ in the right hand acts as normalization, making $v_{i,j}$ comparable among different spans $(i,j)$.
	
	We also define an uncertainty score, $u_{i,j}$, as the entropy of the label distribution for a span:
	\begin{equation}
	\begin{aligned}
		u_{i,j} & = H(L \mid \mathbf{X} =  \mathbf{x}_{i,j}) \\ & = -\sum_{l \in \mathcal{L}} z_{i,j,l}\log z_{i,j,l}.
	\end{aligned}
	\end{equation}
	As discussed in Sec. \ref{subsec:uncertainty effect}, training a NER model with the negatives of higher uncertainty scores, $u_{i,j}$, brings better performances.
	
	Based on $v_{i,j}$ and $u_{i,j}$, we design the following weighted sampling distribution to displace the uniform one when sampling $k$ negatives from $\mathcal{S}$ without replacement:
	\begin{equation}
	\left\{\begin{aligned}
	r_{i,j} & = u_{i,j} * (1 + v_{i,j})^{\mu} \\
	e_{i,j} & = \frac{\exp(r_{i,j} / T)}{\sum_{(i',j',\mathrm{O}) \in \mathcal{S}} \exp(r_{i',j'} / T)}
	\end{aligned}\right.,
	\end{equation}
	where $T \ge 1$ is a temperature to control the smoothness of sampling distribution. $\mu \ge 1$ is to make a trade-off between $v_{i,j}$ and $u_{i,j}$: a high $\mu$ will ensure a low missampling rate while a low $\mu$ will ensure a high uncertainty score. 
	
	To make our approach practical for use, we should specify how to approximate the oracle model, $P_o(l\mid \mathbf{x}_{i,j})$. In the simulation experiment in Sec.~\ref{sec:uncertainty}, the oracle model is a fixed model via standard negative sampling which is learned on held-out training data. It's natural to use such a fixed model to approximate the oracle model here. However, this will cause a side-effect that our approach is not self-contained due to its dependence on an external model. 
	
	Consequently, we consider an adaptive style: directly using the NER model, $P(l\mid \mathbf{x}_{i,j}; \theta)$, itself as the oracle model whose parameter $\theta$ is learned during the training process. Under this scheme, $T$ is scheduled as $\sqrt{C- c}$, where $C$ is the number of training epochs and $0 \le c < C$ is the current epoch number. Since the NER model $P(l\mid \mathbf{x}_{i,j}; \theta)$ is not accurate in early epochs of training, a more uniform sampling distribution (i.e., higher $T$) is safer for sampling negatives.
	
	\begin{table*}[t]
		\centering
		
		\setlength{\tabcolsep}{1.8mm}{}
		\begin{tabular}{c|cc|cc}
			\hline
			
			\multirow{2}{*}{Masking Prob.} & \multicolumn{2}{c|}{CoNLL-2003} & \multicolumn{2}{c}{OntoNotes 5.0} \\
			
			\cline{2-5}
			& Vanilla Neg. Sampling & Our Variant & Vanilla Neg. Sampling & Our Variant  \\
			
			\hline
			0.5 & $89.22$ & $\mathbf{89.51}$ & $88.17$ & $\mathbf{88.31}$ \\
			
			0.6 & $87.65$ & $\mathbf{88.03}$ & $87.53$ & $\mathbf{88.02}$ \\
			
			0.7 & $86.24$ & $\mathbf{86.97}$ & $86.42$ & $\mathbf{86.85}$\\
			
			0.8 & $78.84$ & $\mathbf{82.05}$ & $85.02$ & $\mathbf{86.12}$ \\
			
			0.9 & $51.47$ & $\mathbf{60.57}$ & $74.26$ & $\mathbf{80.55}$ \\
			\hline
			
		\end{tabular}
		\caption{The comparisons of F1 scores on synthetic datasets.} 
		\label{tab:Performances on Synthetic Datasets}
	\end{table*}
	
	\begin{table*}[t]
		\centering
		
		\setlength{\tabcolsep}{6.5mm}{}
		\begin{tabular}{c|c|cc}
			
			\hline
			\multicolumn{2}{c|}{Method} & EC & NEWS \\
			
			\hline 
			\multicolumn{2}{c|}{Partial CRF~\citep{yang-etal-2018-distantly}} & $60.08$ & $78.38$ \\
			
			\multicolumn{2}{c|}{Positive-unlabeled (PU) Learning~\citep{peng-etal-2019-distantly}} & $61.22$ & $77.98$ \\
			
			\multicolumn{2}{c|}{Weighted Partial CRF~\citep{jie-etal-2019-better}} & $61.75$ & $78.64$ \\
			
			\hdashline
			\multicolumn{2}{c|}{BERT-MRC~\citep{li-etal-2020-unified}} & $55.72$ & $74.55$ \\
			
			\multicolumn{2}{c|}{BERT-Biaffine Model~\citep{yu-etal-2020-named}} & $55.99$ & $74.57$ \\
			
			\hdashline
			\multirow{2}{*}{\citet{li2021empirical}} & Vanilla Negative Sampling & $66.17$ & $85.39$ \\
			& w/o BERT, w/ BiLSTM & $64.68$ & $82.11$ \\
			
			\hline
			\multirow{2}{*}{This Work} & Our Proposed Variant & $\mathbf{67.03}$ & $\mathbf{86.15}$ \\
			
			& w/o BERT, w/ BiLSTM & $\mathbf{65.81}$ & $\mathbf{83.79}$  \\
			
			\hline
		\end{tabular}
		\caption{The experiment results on two real-world datasets.}
		\label{tab:Performances on Real-world Datasets}
	\end{table*}
	
	Finally, we get a weighted sampling distribution with the NER model, $P(l\mid \mathbf{x}_{i,j}; \theta)$, adaptively approximating the oracle model. Our training procedure is the same as that of vanilla negative sampling (see Fig.~\ref{fig:Negative Sampling}), except for sampling distribution.

\section{Experiments}
\label{sec:Experiments}

	To evaluate our proposed variant (i.e., negative sampling w/ weighted sampling distribution) , we have conducted extensive experiments on under-annotated cases: synthetic datasets and real-world datasets. We also validate its superiority in well-annotated scenarios.
	
\subsection{Settings}

	The well-annotated datasets are CoNLL-2003 and OntoNotes 5.0. CoNLL-2003 contains $22137$ sentences and is split into $14987$, $3466$, and $3684$ sentences for training set, development set, and test set, respectively. OntoNotes 5.0 contains $76714$ sentences from a wide variety of sources. We follow the same format and partition as in~\citet{Luo_Xiao_Zhao_2020}. The construction of synthetic datasets is based on well-annotated datasets and has been already described in Sec.~\ref{sec:two-perspectives}.
	
	Following prior works~\citep{nooralahzadeh-etal-2019-reinforcement,li2021empirical}, we adopt EC and NEWS as the real-world datasets. Both of them are collected by \citet{yang-etal-2018-distantly}. The data contains 2400 sentences annotated by human and is divided into three portions: $1200$ for training set, $400$ for development set, and $800$ for test set. \citet{yang-etal-2018-distantly} build an entity dictionary of size $927$ and apply distant supervision on a raw corpus to get extra $2500$ training cases. NEWS is constructed from MSRA~\citep{levow-2006-third}. Training set is of size $3000$, development set is of size $3328$, and test set is of size $3186$ are all sampled from MSRA. \citet{yang-etal-2018-distantly} collect an entity dictionary of size $71664$ and perform distant supervision on the remaining data to obtain extra $3722$ cases for training. Both EC and NEWS contain massive incomplete annotations. NER models trained on them suffer from \textit{unlabeled entity problem}.
	
	\begin{table*}[t]
		\centering
		
		\setlength{\tabcolsep}{4.4mm}{}
		\begin{tabular}{c|cc}
			
			\hline		
			Method & CoNLL-2003 & OntoNotes 5.0 \\
			
			\hline
			Flair Embedding~\citep{akbik-etal-2018-contextual} & $93.09$ & $89.3$ \\
			
			HCR w/ BERT~\citep{Luo_Xiao_Zhao_2020} & $93.37$ & $90.30$ \\
			
			BERT-MRC~\citep{li-etal-2020-unified} & $93.04$ & $91.11$ \\
			
			BERT-Biaffine Model~\citep{yu-etal-2020-named} & $93.5$ & $\mathbf{91.3}$ \\
			
			
			\hdashline
			Vanilla Negative Sampling~\citep{li2021empirical} & $93.42$ & $90.59$ \\
			
			\hline
			Our Proposed Variant & $\mathbf{93.68}$ & $91.17$ \\
			
			\hline
		\end{tabular}
		\caption{The experiment results on well-annotated datasets.}
		\label{tab:Performances on Well-annotated Datasets}
	\end{table*}

	We adopt the same configurations for all the datasets. The dimensions of scoring layers are $256$. L2 regularization and dropout ratio are $10^{-5}$ and $0.4$, respectively. We set $\mu = 8$. This setting is obtained via grid search. We use Adam~\citep{kingma2014adam} to optimize models. Our models run on GeForceRTX 2080T. At test time, we convert the predictions from our models into IOB format and use conlleval\footnote{https://www.clips.uantwerpen.be/conll2000/chunking/ \\ conlleval.txt.} script to compute the F1 score. In all the experiments, the improvements of our models over the baselines are statistically significant with a rejection probability lower than $0.01$.
	
\subsection{Results on Under-annotated Scenarios}

	We show how NER models with our proposed approach perform on two types of datasets: synthetic datasets (e.g., CoNLL-2003) and real-world datasets (e.g., EC). Synthetic datasets offer us a chance to qualitatively analyze how our approach reacts to changing mask probabilities. For example, we will show that weighted sampling distribution is beneficial in fast loss convergence. Real-world datasets provide more appropriate cases to evaluate NER models, since missing annotations are caused by limited knowledge resources, rather than intentional masking.

\subsubsection{Results on Synthetic Datasets}

	Fig.~\ref{fig:Convergence} shows the changes of F1 scores from vanilla negative sampling and our proposed variant with training epochs. The synthetic datasets are constructed from OntoNotes 5.0. We can see that, compared with vanilla negative sampling, our proposed variant obtains far better performances on the first few epochs and converges much faster. These results clearly verify the superiority of our weighted sampling distribution.
	
	Table~\ref{tab:Performances on Synthetic Datasets} compares vanilla negative sampling with our proposed variant in terms of F1 score. We can draw two conclusions. Firstly, our approach greatly improves the effectiveness of negative sampling. For example, when masking probability $p$ is $0.8$, we increase the F1 scores by $4.07\%$ on CoNLL-2003 and $1.29\%$ on OntoNotes 5.0. Secondly, our variant is still robust when \textit{unlabeled entity problem} is very serious. Setting masking probability $p$ from $0.5$ to $0.9$, our performance on OntoNotes 5.0 only drops by $8.79\%$. By contrast, it's $32.33\%$ for vanilla negative sampling.
	
\subsubsection{Results on Real-world Datasets}

	Real-world datasets contain a high percentage of partial annotations caused by distant supervision. Hence, the models trained on them are faced with serious \textit{unlabeled entity problem}.
	
	Table~\ref{tab:Performances on Real-world Datasets} diagrams the results. The F1 scores of negative sampling and Partial CRF are from their papers. We have additionally reported the results of PU Learning\footnote{https://github.com/v-mipeng/LexiconNER.}, Weighted Partial CRF\footnote{https://github.com/allanj/ner\_incomplete\_annotation.}, BERT-MRC\footnote{https://github.com/ShannonAI/mrc-for-flat-nested-ner.}, and BERT-Biaffine Model\footnote{https://github.com/juntaoy/biaffine-ner.}, using their codes. We can draw three conclusions from the table. Firstly, we can see that BERT-MRC and BERT-Biaffine Model both perform poorly on real-world datasets. This manifests the huge adverse impacts of unlabeled entities on models. Secondly, our variant has achieved new state-of-the-art results on the two datasets. Our scores outnumber those of vanilla negative sampling by $1.30\%$ and $0.89\%$ on them. Thirdly, to make fair comparisons, we also report the results of using Bi-LSTM, instead of BERT, as the sentence encoder. This version still notably surpasses prior methods on the two datasets. For example, compared with Weighted Partial CRF, our improvements are $6.57\%$ on EC and $6.55\%$ on NEWS.
	
\subsection{Results on Well-annotated Scenarios}

	As a by-product, we also evaluate the effectiveness of the proposed method on the well-annotated datasets CoNLL-2003 and OntoNotes 5.0. As shown in Table~\ref{tab:Performances on Well-annotated Datasets}, we have achieved excellent performances on well-annotated datasets. The F1 scores of baselines are copied from \citet{li2021empirical}. With our weighted sampling distribution, the results of negative sampling are improved by $0.28\%$ on CoNLL-2003 and $0.64\%$ on OntoNotes 5.0. Our model even outperforms BERT-Biaffine Model by $0.19\%$ on CoNLL-2003. Compared with a strong baseline, Flair Embedding, our improvements of F1 scores are $0.63\%$ and $2.09\%$ on the two datasets. These results further verify the effectiveness of the proposed sampling distribution.
	
	The comparison here is in fact unfair for our model, because negative sampling only utilizes a small part of negatives, $\lceil{\lambda n}\rceil$ rather than $\frac{n(n+1)}{2} - m$ (see Sec.~\ref{sec:Preliminaries} for the details of these numbers). We also have tried using all the negatives for training our model, and found the resulting performances significantly outnumber those of baselines. The purpose of Table~\ref{tab:Performances on Well-annotated Datasets} is to confirm that negative sampling even works well for situations with complete entity annotations.
	
\section{Related Work}
	
	A number of NER models~\citep{lample-etal-2016-neural,akbik-etal-2018-contextual,clark-etal-2018-semi,li-etal-2020-handling,yu-etal-2020-named} based on end-to-end neural networks and well-labeled data have achieved promising performances. A representative work is Bi-LSTM CRF~\citep{huang2015bidirectional}. However, in many situations (e.g., distantly supervised NER), these seemingly perfect models severely suffer from \textit{unlabeled entity problem}, where massive named entities are not annotated in training data. There are some techniques developed by earlier works to mitigate this issue. Fuzzy CRF and AutoNER~\citep{shang-etal-2018-learning} allow NER models to learn from high-quality phrases that might be potential named entities. Mining these phrases demands external resources~\citep{shang2018automated}, which is not flexible for practical usage. Moreover, there is no guarantee that unlabeled entities are fully covered by these phrases. PU Learning~\citep{peng-etal-2019-distantly,mayhew-etal-2019-named} adopts a weighted training loss and assigns low weights to false negative instances. This approach is limited by requiring prior information or heuristics. Partial CRF~\citep{yang-etal-2018-distantly,jie-etal-2019-better} is an extension of CRF, which marginalizes the loss over all candidates that are compatible with the incomplete annotation. While being theoretically attractive, this approach still needs a portion of well-annotated data to obtain true negatives, which limits its use in real-world applications. For example, in fine-grained NER~\citep{ling2012fine}, all the training data are produced through weak supervision, and its manual annotation is very difficult, so obtaining enough high-quality data is not practical.
	
	Recently, \citet{li2021empirical} find that unlabeled entities severely misguide the NER models during training. Based on this observation, they introduce a simple yet effective approach using negative sampling. It's much more flexible than other methods, without resorting to external resources, heuristics, etc. However, \citet{li2021empirical} haven't well explained why negative sampling works and there are weaknesses in their principle analysis. In this paper, we first show two factors that affect how negative sampling avoids NER models from being impacted by missing annotations. Notably, a theoretical guarantee is provided for the zero missampling rate. Then, we propose weighted sampling distribution to further improve negative sampling based on our former findings.
	
\section{Conclusion} 
	
	In this work, we have made two contributions. On the one hand, we analyze why negative sampling succeeds in handling \textit{unlabeled entity problem} from two perspectives: missampling and uncertainty. Empirical studies show both  low missampling rates and high uncertainties are essential for applying negative sampling. Based on entity sparsity, we also provide a theoretical lower bound for the probability of zero missampling rate. On the other hand, we propose an adaptive and weighted sampling distribution that takes missampling and uncertainty into account. We have conducted extensive experiments to verify whether this further improves the effectiveness of negative sampling. Results on synthetic datasets and well-annotated datasets show that our approach benefits in performances and loss convergence. With improved negative sampling, our NER models also have achieved new state-of-the-art results on real-world datasets.

\bibliography{anthology,custom}
\bibliographystyle{acl_natbib}

\end{document}